\newtheorem{proposition}{Proposition}
\begin{document}
%
\title{MetaMixUp: Learning Adaptive Interpolation Policy of MixUp with Meta-Learning}
%
%
%
\author{Zhijun Mai, Guosheng Hu, Dexiong Chen, Fumin Shen, Heng Tao Shen}

\maketitle

\begin{abstract}
MixUp is an effective data augmentation method to regularize deep neural networks via random linear interpolations between pairs of samples and their labels. It plays an important role in model regularization, semi-supervised learning and domain adaption.
However, despite its empirical success, its deficiency of randomly mixing samples has poorly been studied. Since deep networks are capable of memorizing the entire dataset, the corrupted samples generated by vanilla MixUp with a badly chosen interpolation policy will degrade the performance of networks. To overcome the underfitting by corrupted samples, inspired by Meta-learning (learning to learn), we propose a novel technique of learning to mixup in this work, namely, MetaMixUp.
Unlike the vanilla MixUp that samples interpolation policy from a predefined distribution, this paper introduces a meta-learning based online optimization approach to dynamically learn the interpolation policy in a data-adaptive way. The validation set performance via meta-learning captures the underfitting issue, which provides more information to refine interpolation policy. Furthermore, we adapt our method for pseudo-label based semi-supervised learning (SSL) along with a refined pseudo-labeling strategy. In our experiments, our method achieves better performance than vanilla MixUp and its variants
under supervised learning configuration. In particular, extensive experiments show that our MetaMixUp adapted SSL greatly outperforms MixUp and many state-of-the-art  methods 
on CIFAR-10 and SVHN benchmarks under SSL configuration.
\end{abstract}

\begin{IEEEkeywords}
Deep Learning, MixUp, Meta-learning, Regularization.
\end{IEEEkeywords}

%
\IEEEpeerreviewmaketitle

\section{Introduction}
%
%
%
%
\IEEEPARstart{D}{espite} their striking success in many challenging  tasks, deep neural networks have shown prone to overfitting, especially when the number of annotated samples is scarce as in the weakly-supervised \cite{OquabBLS15,DurandMTC17}  and semi-supervised learning \cite{miyato2018virtual,TarvainenV17}. In addition, this is also reflected in high generalization errors when the ability of deep CNNs overfits or memorizes the corrupted samples that have slight distributional shifts, which are also known as the imperceptible adversarial perturbations \cite{SzegedyZSBEGF13}. These issues can make deep learning based systems degrade the prediction performance in practice. It is thus desirable to design effective regularization methods to control the model complexity and reduce the gap between training error and generalization error.  

Recently due to the great advances of machine learning, remarkable regularization methods have been proposed. In addition to manual designed regularization architecture network Shake-Shake Regularization \cite{Gastaldi17}, typically adding noise to the deep model is important to reduce overfitting and learn more robust abstractions, e.g., dropout \cite{SrivastavaHKSS14} and randomized data augmentation \cite{KrizhevskySH17}. A simple and effective method, called MixUp~\cite{zhangmixup}, has been proposed recently as a data augmentation scheme to address generalization problems. Specifically, it is performed to generate additional virtual samples during training via a simple linear interpolation of randomly picked training sample pairs, as well as their labels. However, its interpolation policy (the weights for interpolating  paired samples) is randomly chosen from some prior distribution (e.g. Beta Distribution) for each pair of samples at each iteration, which may lead to manifold intrusion and thus underfitting~\cite{Adamixup}. We observe that the original MixUp is not robust in cases when the generated virtual samples are adjacent to real samples of different categories, the corresponding virtual labels becomes ambiguous. Nevertheless, original MixUp method does not take such ambiguities into account. 
Therefore, carefully choosing an adequate interpolation policy to avoid underfitting is crucial to achieve promising performance.
It is not trivial to learn better interpolation policy for MixUp technique since deep CNNs are more prone to memorize corrupted samples and improving deep CNNs on corrupted samples and labels is clearly an under-studied problem and worthy of exploration. AdaMixup\cite{Adamixup} proposes to assess the quality of interpolation policy with well selected triplet data and uses an additional intrusion discriminator to judge whether the sample generated by a policy collides with a real data point. Nevertheless, relying on training an additional carefully designed network that estimates the interpolation as a supervision signal, their method has additional hyperparameters (e.g. triplets selection, larger model architecture, more optimization parameters, etc.) to tune and can be hard to deploy for a new dataset or task.

In this paper, we also propose a new theoretical perspective of MixUp by showing its empirical risk as a lower bound of the gradient Lipschitz constant of a neural network. This observation not only helps understanding vanilla MixUp better but also puts forward again the underfitting issue caused by naive choice of the interpolation policy. Our method is inspired by recent success of Meta-Learning, a learning paradigm inspired by the cognitive process of human and animals, in which a model is learning to learn better using validation set as meta-data. This paper tackles manifold intrusion/underfitting issue using meta-learning on MixUp to learn the interpolation policy in a data-adaptive way. Meta-learning has successfully shown to be powerful in learning data-adaptive rules and policies, such as initial neural network weights \cite{finn2017model}, optimization hyperparameters \cite{snoek2012practical}, unsupervised learning rules \cite{metz2018learning} etc., making models more general and adaptive to the new datasets and tasks. For our problem, our intuition is that a meta model with random interpolation policy learns to gained knowledge from metadata, can provide instructive supervision for vanilla MixUp to refine interpolation policy in a data-driven style. A reasonable interpolation policy for MixUp can help deep CNNs alleviate manifold intrusion problem made by corrupted labels and samples.
Our method, dubbed MetaMixUp, consists of learning the interpolation policy of MixUp by adapting a meta-learning method in a data-adaptive way. Specifically, we aim to learn a interpolation policy to minimize the expected loss for the training set. Meta model can be learned to discover new data-driven interpolation policy from metadata. The learned data-driven interpolation policy can be updated a few times taking into account of the main model's feedback. Whenever the interpolation policy is learned, we turn the deep CNNs from meta-stage to main-stage to minimize the learning objective, where the main-stage controls training procedure to learn each mixed sample. At the test time, deep CNNs makes predictions alone in main-stage. Instead of searching a discrete set of candidate interpolation policy, we relax the optimization via an online gradient-based meta-learning to make it continuous, so that the interpolation policy can be optimized with respect to its validation set performance by gradient descent. The data-adaptive of gradient-based optimization, as opposed to selection from prior distribution, allows MetaMixUp to achieve competitive even better performance for different tasks.

To our best knowledge, our method is the first one that applies meta-learning to guide interpolation policy learning for MixUp technique. It tackles the manifold intrusion problem in a more direct and simple way and leads to better performance over original MixUp and recent proposed AdaMixUp on typical image classification benchmarks: ImageNet, MNIST, SVHN, Fashion-MNIST, CIFAR-10 and CIFAR-100 under supervised configuration. To demonstrate its adaption for semi-supervised tasks, our method extends MixUp to the pseudo-label based methods~\cite{lee2013pseudo} and further adopts an asynchronous pseudo labeling strategy. Our resulting semi-supervised method improves the performance over the original pseudo-label based SSL method by a large margin and achieves comparable performance over state-of-the-art methods on CIFAR-10 and SVHN under semi-supervised configuration. Furthermore, we apply MetaMixUp to a powerful MixUp augmented SSL method called MixMatch \cite{mixmatch}, and improve the previous state-of-the-art results, which suggests that our MetaMixUp is auxiliary to other methods of semi-supervised learning.

To sum up, we highlight our threefold contributions as follows. 
\begin{enumerate}
\item We address underfitting issue caused by badly chosen interpolation policy of vanilla MixUp. And we introduce a new theoretical perspective that MixUp is a lower bound of the Lipschitz constant of the gradient of the neural network to help further understanding vanilla MixUp. 
\item We propose a gradient-based meta-learning algorithm which is exploited to guide refining interpolation policy of MixUp in a data-driven way. The model can be optimized with respect to its validation set performance by gradient descent. We relax the optimization with an online approximation to improve training efficient. We find that MetaMixUp achieves the best performance over vanilla MixUp and AdaMixup.
\item We extend our MetaMixUp and MixUp to semi-supervised learning tasks with an asynchronous pseudo labeling strategy. Through extensive experiment we show that our extensions achieve highly competitive results on CIFAR-10 and SVHN, which we attribute to their adaption for other tasks.
\end{enumerate}

The rest of this paper is organized as follows. In section \uppercase\expandafter{\romannumeral2}, we review the literature relevant to our work. In section \uppercase\expandafter{\romannumeral3}, a new perspective of MixUp is introduced, and the proposed MetaMixUp along with its extensions to SSL are presented in detail, respectively. We provide experimental results and analysis in section \uppercase\expandafter{\romannumeral4}, and summarize this paper in section \uppercase\expandafter{\romannumeral5}.

 

\section{Related Work}
\subsection{Regularization}
Regularization is an ongoing subject in machine learning and has been widely studied. It refers to the general approach of penalizing the amount of information neural network contains to keep the parameters simple \cite{HintonC93}. The constraints and disturbance on model keep it from over-fitting to training data and thus hopefully make it generalize better to test data. In particular, a common regularization technique is to add a loss term which penalizes the L2 norm of the model parameters. When we are using simple gradient descent optimizer such as Adam \cite{KingmaB14}, this loss term is equivalent to weight decay, which exponentially decaying the weight values towards zero in training procedure.
Data augmentation techniques are commonly used regularizer by leveraging additional samples generated by appropriate domain-specific transformations. For instance, random cropping, flipping and rotating are typical data-augmentation ways for image data~\cite{KrizhevskySH17,HeZRS16}. Dropout is another very helpful regularizer in avoiding over-fitting by randomly dropping units from the neural network during training \cite{SrivastavaHKSS14}. In contrast to these data-independent methods, AutoAugment~\cite{cubuk2018autoaugment} has proposed a data-adaptive way to search the best data-augmentation policy from a huge space of policies, which are combinations of many sub-policies. On the other hand, instead of operating on single image sample, MixUp~\cite{zhangmixup} and between-class learning~\cite{TokozumeUH18}
augment training data points by interpolating multiples examples and labels. Manifold Mixup\cite{Vikas18}  leverages semantic interpolations in random layers as additional training signal to train neural networks. Nonetheless, their interpolation policies are predefined
and are not data-driven. Our approach is a data-driven extension of MixUp via meta-learning, which leverages vicinal relations between examples and can alleviate manifold intrusion problem introduced in \cite{Adamixup}. It is closely related to AdaMixup~\cite{Adamixup}, which also learns the mixing policy from data. While AdaMixup requires training an additional network to infer the policy and also 
an intrusion discriminator {with well selected triplets}, our method is directly applied to the original MixUp method without adding further components (e.g. a carefully designed discriminator) to the model.

\subsection{Meta-learning}
 Meta-learning methods date back to the 90s \cite{thrun1998learning,bengio1990learning} and have recently resurged with various techniques focused on learning how to learn and thus quickly adapt to new information \cite{RaviL17}. Meta-learning approaches can be broadly categorized into three groups, which has been proposed to solve the few-shot learning problem. \textit{Gradient-based methods} \cite{finn2017model,RaviL17} use gradient descent to adapt the model parameters. \textit{Nearest-neighbor methods} \cite{SnellSZ17} learn prediction rule over the embeddings based on distance to the nearest class mean. \textit{Neurons-based methods} \cite{MishraR0A18,MunkhdalaiYMT18} learn meta procedure of how to adapt the connections between neurons for different tasks. Our method is tightly related to gradient-based meta-learning algorithm MAML \cite{finn2017model}. MetaMixUp also implicitly learns how to quickly adapt to new datasets and tasks through a gradient-based meta-learning algorithm. Unlike MAML, our optimization procedure works in an online fashion rather than
relying on heavy offline training stages. Similar to Meta-learning, recent hashing researches focused on learning to hash \cite{ShenSHT13}. Their goal is to learn data-dependent hash functions which generate more compact codes to achieve good search accuracy \cite{ShenGLYS17, ShenSLS15}. This paper is more similar to the optimization-based meta-learning method for sample reweighting \cite{RenZYU18}, 
which has focused on imbalanced classification and noisy label problems.

\subsection{Hyperparameter optimization}
Performance of machine learning algorithms depends critically on identifying a good set of hyperparameters. Recent interest in complex and computationally expensive machine learning models with many hyperparameters, such as automated machine learning (AutoML) frameworks and deep neural networks, has resulted in a resurgence of research on hyperparameter optimization (HPO). The current gold standard methods for hyperparameter selection are blackbox optimization methods. Due to the non-convex nature of the problem, global optimization algotithms are usually applied. The standard baseline methods involve training tens of models that select hyperparameters configurations randomly and nonadptively\cite{BergstraYC13,ThorntonHHL13}, e.g., grid or random search. Moreover, the majority of recent work in this growing area focuses on Bayesian hyperparameter optimization \cite{SnoekLA12} with the goal of optimizing hyperparameter configuration selection in an iterative fashion. However, recent gradient-based techniques for HPO have significantly increase the number of hyperparameters that can be optimized \cite{Pedregosa16}. In this way, it is now possible to tune large-scale weight vectors as hyperparameters associated with neural networks. Such an approach is suited for MixUp technique where the interpolation weight of each sample pair are treated as hyperparameters across a set of training episodes.

\subsection{Semi-Supervised learning}
Semi-Supervised learning has been extensively studied and has a large variety of groups. Typical successful SSL methods have involved some consistency regularization such as $\Pi$-model~\cite{LaineA16}, VAT~\cite{miyato2018virtual}, Mean Teacher \cite{TarvainenV17} and simple label propagation such as pseudo-labeling~\cite{lee2013pseudo} or more generally self-training~\cite{de1994learning,SaitoUH17}. Our method is more similar to pseudo-labeling based methods. Basically, pseudo-labels are the current predictions of the classifier assigned to unlabeled examples. \cite{SaitoUH17} proposed to leverage multiple networks to asymmetrically give pseudo-labels to unlabeled samples. \cite{XieZCC18} proposed moving average centroid alignment to reduce bias caused by false pseudo-labels. However, these methods ignore considering the stability of the pseudo-labels. We extend MixUp and our MetaMixUp to SSL tasks with an asynchronous pseudo-labeling strategy to stabilize the training. Another recent proposed MixMatch \cite{mixmatch} works by guessing low-entropy labels for unlabeled examples under multiple data augmentations for each sample. Likewise, it combines MixUp with consistency regularization.

\section{Data-adaptive MixUp via Meta Learning}
In this section, we first introduce a new perspective of MixUp. We then detail our algorithm of learning the interpolation policy for MixUp. Finally, we adapt our proposed MetaMixUp for supervised and semi-supervised tasks respectively.

\begin{figure*}[h]
	\centering  
	\includegraphics[width=0.9\textwidth,height=0.5\textwidth]{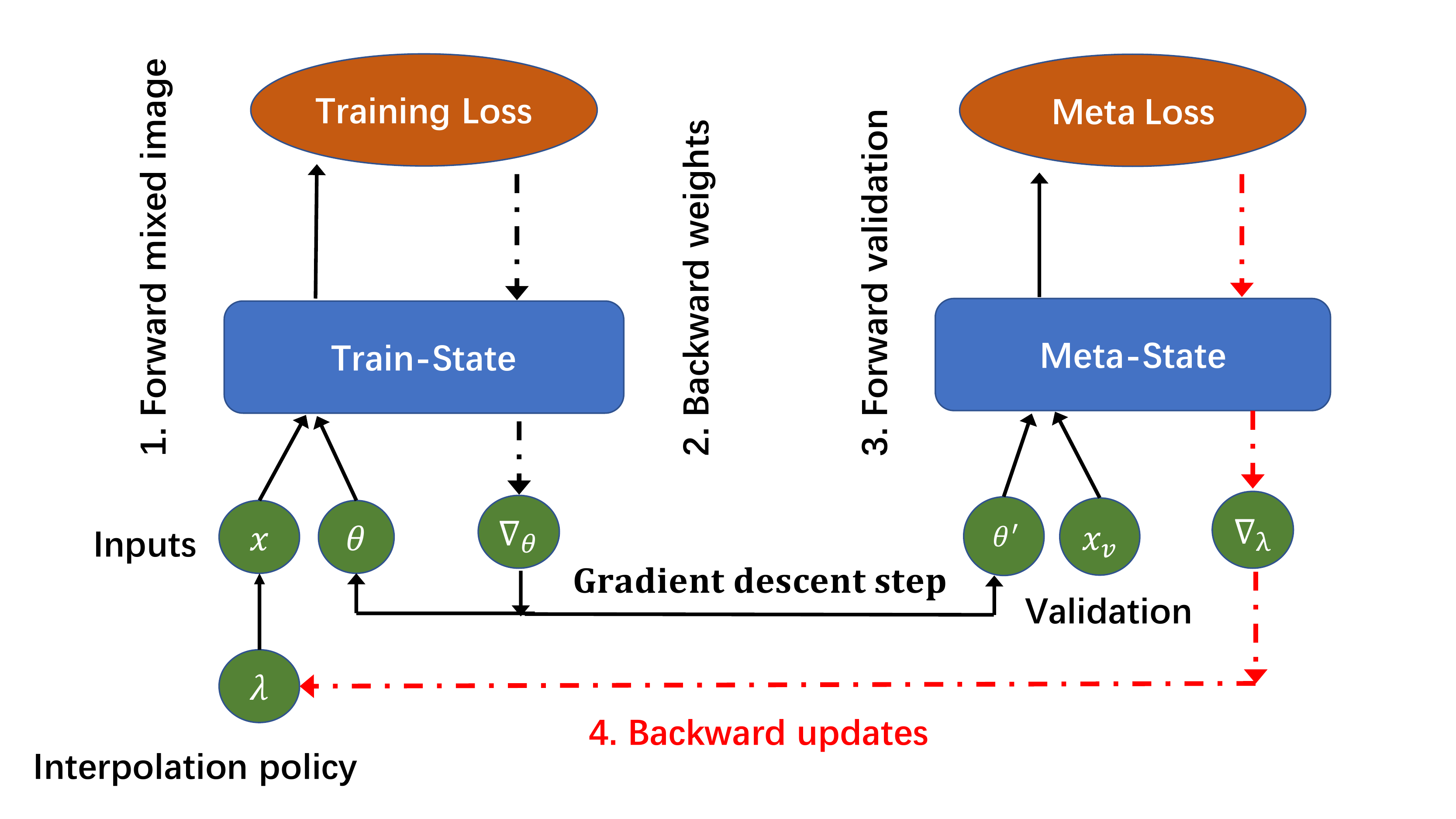}
	\caption{Computation Graph of Our MetaMixUp}  
	\label{fig:figure1}
\end{figure*}

\subsection{MixUp as A Lower Bound of the Gradient Lipschitz Constant}
MixUp, originally proposed by \cite{zhangmixup}, augments the training set 
by linearly interpolating a random pair of examples and their corresponding 
labels selected in a minibatch through permutation: 
\begin{equation}\label{eq:mixup}
\begin{split}
	\Tilde{ x}_{i} &= \lambda  x_{i} + (1-\lambda)  x_{j}, \\
	\Tilde{ y}_{i} &= \lambda y_{i} + (1-\lambda)  y_{j},
\end{split}
\end{equation}
where $(x_{i}, y_{i})$ and $(x_{j},y_{j})$ are two data-target samples randomly drawn 
from the training set, and $\lambda \in [0, 1]$ {is the interpolation weighting coefficient.} Then the objective of a supervised problem becomes minimizing the empirical risk over the MixUp-generated samples.

Despite the empirical effectiveness of MixUp, how it controls the smoothness of a neural network has hardly been investigated. As discussed in \cite{Adamixup}, bad interpolation coefficient $\lambda$ can lead to underfitting caused by the manifold intrusion. This occurs when 
the MixUp-generated sample collides with an existing real example with
a label different from the interpolated pair's, and thus leads to performance degradation. Here, we consider MixUp from a regularization point of view and show that it is a lower bound of the Lipschitz constant of the gradient of the neural network.

While many previous works have controlled the smoothness of a neural network by controlling its Lipschitz constant \cite{cisse2017parseval,TsuzukuSS18,miyato2018virtual}, we consider here to control a stronger condition, which is the Lipschitz constant of its (sub)gradient. Specifically, we assume that the predictive function $f:\mathbb{R}^d\to\mathbb R$ is a differentiable function and its gradient is $\kappa$-Lipschitz continuous:
\begin{equation}\tag{P1}\label{eq:grad_lip}
 \forall{x, x' \in \mathbb{R}^d} \quad	\| \nabla f(x)-\nabla f(x')\| \le \kappa\| x - x'\|
\end{equation}
On the other hand, we consider the following inequality: 
\begin{equation}\tag{P2}\label{eq:mixup_bound}
\begin{split}
	|f(\lambda x + (1 - \lambda)x' ) - [\lambda f(x) + (1 - \lambda) f(x')]| \\
	\le \frac{\lambda (1-\lambda )\kappa}{2}\| x - x' \|^2
\end{split}
\end{equation}
where $\lambda\in [0,1]$. {Under MixUp setting, $x$ and $x'$ in (P2) can represent $x_i$ and $x_j$ in \eqref{eq:mixup}.} Noting that the left term of (P2) is equivalent to the empirical risk of MixUp by replacing the $\ell1$-loss with a general loss function and the prediction $f(x)$ and $f(x')$ with their true labels. 
MixUp loss can be considered as a proxy of the left term.

Now we announce the following proposition which builds the relation between the Lipschitz continuity of the gradient and the empirical risk of MixUp.
\begin{proposition}[Link between MixUp and gradient Lipschitz continuity]
	The property \eqref{eq:grad_lip} $\Rightarrow$ \eqref{eq:mixup_bound}.
\end{proposition}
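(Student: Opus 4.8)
The plan is to derive \eqref{eq:mixup_bound} from \eqref{eq:grad_lip} by way of the standard quadratic (``descent lemma'') estimate that a Lipschitz gradient entails, and then to exploit a cancellation that occurs when the expansion is centered at the interpolated point. First I would establish the auxiliary bound that for any $a,b\in\mathbb{R}^d$,
\[
|f(b)-f(a)-\langle\nabla f(a),\,b-a\rangle|\le\frac{\kappa}{2}\|b-a\|^2 .
\]
This follows from the fundamental theorem of calculus applied to $t\mapsto f(a+t(b-a))$, which gives the remainder as $\int_0^1\langle\nabla f(a+t(b-a))-\nabla f(a),\,b-a\rangle\,dt$; bounding the integrand by Cauchy--Schwarz and then by \eqref{eq:grad_lip} produces $\int_0^1 t\kappa\|b-a\|^2\,dt=\tfrac{\kappa}{2}\|b-a\|^2$.

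Next I would introduce the mixed point $z:=\lambda x+(1-\lambda)x'$ and rewrite the quantity inside the absolute value in \eqref{eq:mixup_bound} as
\[
\lambda f(x)+(1-\lambda)f(x')-f(z)=\lambda\big(f(x)-f(z)\big)+(1-\lambda)\big(f(x')-f(z)\big).
\]
Applying the auxiliary estimate twice, both times centered at $a=z$, I would expand $f(x)-f(z)$ and $f(x')-f(z)$ to first order in $\nabla f(z)$ with controlled quadratic remainders $R_1$ and $R_2$. The crux of the argument is that the first-order terms cancel exactly, because $\lambda(x-z)+(1-\lambda)(x'-z)=\big(\lambda x+(1-\lambda)x'\big)-z=0$, so only the remainders survive.

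It then remains to track the sizes of the remainders. Since $x-z=(1-\lambda)(x-x')$ and $x'-z=-\lambda(x-x')$, the auxiliary estimate yields $|R_1|\le\frac{\kappa}{2}(1-\lambda)^2\|x-x'\|^2$ and $|R_2|\le\frac{\kappa}{2}\lambda^2\|x-x'\|^2$. The triangle inequality bounds the whole expression by $\frac{\kappa}{2}\|x-x'\|^2\big(\lambda(1-\lambda)^2+(1-\lambda)\lambda^2\big)$, and factoring out $\lambda(1-\lambda)$ collapses the bracket to $\lambda(1-\lambda)$, recovering exactly the claimed constant $\tfrac{\lambda(1-\lambda)\kappa}{2}$. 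I expect the only delicate point to be the first step, namely stating and justifying the quadratic remainder bound cleanly (in particular the differentiability and integrability invoked in the fundamental-theorem step); once that lemma is in hand, the centering cancellation and the closing algebra are routine.
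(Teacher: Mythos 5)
Your proof is correct, but it follows a genuinely different route from the paper's. You first isolate the standard descent lemma $|f(b)-f(a)-\langle\nabla f(a),b-a\rangle|\le\tfrac{\kappa}{2}\|b-a\|^2$, then expand both $f(x)$ and $f(x')$ around the mixed point $z=\lambda x+(1-\lambda)x'$, exploit the exact cancellation $\lambda(x-z)+(1-\lambda)(x'-z)=0$ so that only the two quadratic remainders survive, and close with the triangle inequality and the identity $\lambda(1-\lambda)^2+(1-\lambda)\lambda^2=\lambda(1-\lambda)$. The paper instead anchors everything at $x'$: it writes $f(\lambda x+(1-\lambda)x')$ and $f(x)-f(x')$ as line integrals along two paths emanating from $x'$, collapses the difference into a \emph{single} integral of $\langle\nabla f(\lambda t x+(1-\lambda t)x')-\nabla f(tx+(1-t)x'),\,x-x'\rangle$, and bounds that integrand by Cauchy--Schwarz plus \eqref{eq:grad_lip}, noting the two path points at parameter $t$ differ by $(1-\lambda)t(x-x')$; integrating $t$ over $[0,1]$ produces the factor $\tfrac12$. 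What your approach buys is modularity and symmetry: the auxiliary lemma is a reusable textbook fact, the cancellation argument is transparent, and neither endpoint is privileged. What the paper's approach buys is economy: one integral, one application of the Lipschitz property, no remainder bookkeeping. Notably, both arguments yield exactly the same constant $\tfrac{\lambda(1-\lambda)\kappa}{2}$ --- your two-remainder split loses nothing because the weights $\lambda,1-\lambda$ multiply remainders of size $(1-\lambda)^2,\lambda^2$ respectively. Your one flagged concern (justifying the fundamental-theorem step) is no more delicate than what the paper itself assumes: $f$ differentiable with $\kappa$-Lipschitz gradient, which makes $t\mapsto f(a+t(b-a))$ continuously differentiable, so the step is sound.
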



\begin{proof}
For all $x$ and $x'$ in $\mathbb R^d$ we have
\vspace{-3mm}
\begin{small}
	\begin{equation}
		\begin{split}
			& f(\lambda x + (1-\lambda )x' )\\ 
			=&f(x') + \lambda \int_0^1 \left\langle \nabla f(\lambda t x + (1-\lambda t)x'), x-x' \right\rangle dt \\
			=&f(x')+\lambda [f(x) - f(x')] \\
			+& \lambda \left[\int_0^1 \left\langle \nabla f(\lambda t x + (1-\lambda t)x'), x-x' \right\rangle dt - (f(x)-f(x'))\right] 
		\end{split}
	\end{equation}
\end{small}
Therefore
\begin{small}
\begin{equation}
	\begin{split}
		& |f(\lambda x + (1-\lambda )x' ) - (\lambda f(x) + (1 - \lambda )f(x')|\\
		=& \lambda \left|\int_0^1 \left\langle \nabla f(\lambda t x + (1-\lambda t)x') - \nabla f(t x + (1 - t)x'), x-x' \right\rangle dt\right| \\
		\le & \lambda \int_0^1 \left| \left\langle \nabla f(\lambda t x + (1-\lambda t)x') - \nabla f(t x + (1 - t)x'), x-x' \right\rangle \right| dt \\
		\le & \lambda \int_0^1 \| \nabla f(\lambda t x + (1-\lambda t)x') - \nabla f(t x + (1 - t)x')\| \| x-x' \| dt  \\
		\le & \lambda \int_0^1 (1 - \lambda )t \kappa\| x - x'\|^2 dt \\
		=& \frac{\lambda(1-\lambda )\kappa}{2} \| x - x' \|^2,
	\end{split}
\end{equation}
\end{small}
where the second inequality follows Cauchy-Schwarz inequality and the third inequality is the property $P1$.

\end{proof}

This proposition suggests controlling the Lipschitz constant of the gradient necessarily requires minimizing MixUp loss. However when $x$ is far from $x'$, the mixing policy $\lambda$ has a much greater effect on the Lipschitz constant. Therefore, minimizing the MixUp loss with a badly chosen interpolation policy $\lambda$ cannot help control the Lipschitz constant but leads to unexpected degradation. 
This observation shows the importance of elaborating a smarter way to choose $\lambda$, especially for dealing with distant pairs.

\subsection{MetaMixUp: Learning Data-Driven Interpolation Policy}
To solve the above problem, we propose MetaMixUp, a meta-learning based method to optimize the interpolation policy of MixUp via an online optimization. Unlike the original MixUp in Eq. \eqref{eq:mixup} which uses a predefined distribution for the interpolation coefficient $\lambda$ and a unique value in each mini-batch, we consider using different interpolation coefficients $\lambda_{i}$ in each mini-batch to improve the diversity and to make them all learnable. Our target is to tackle manifold intrusion problem through learning adaptive interpolation policy, rather than directly using the fixed interpolation coefficients of MixUp.
Specifically, the MetaMixUp is defined as: 
\begin{equation}\label{eq:metamixup}
\begin{split}
	\Tilde{x}_{i} = \lambda_{i} x_{i} + (1-\lambda_{i}) x_{j}, \\
	\Tilde{y}_{i} = \lambda_{i} y_{i} + (1-\lambda_{i})  y_{j},
\end{split}
\end{equation}
where $\lambda_i$ is optimized via meta-learning, by optimizing the meta-objective on a validation set \cite{finn2017model,RenZYU18}.

The optimal weight of a network is given by minimizing the loss function over the training set $\mathcal{D}=\{(\Tilde{x}_{i}, \Tilde{y}_{i})\}^{N}_{i=1}$:  
\begin{equation}\label{E6}
    \theta^{\ast}(\lambda) = \arg \min_{\theta} \frac{1}{N}\sum^{N}_{i=1}\ell(f(\Tilde{x}_{i};\theta),  \Tilde{y}_{i}; \lambda_{i})
\end{equation}
where $\theta$ denotes the network parameters. Analogous to architecture search using progressive evolution \cite{LiuZNSHLFYHM18}, the validation set performance is treated as fitness. The optimal $\lambda$ is then optimized on the validation set $\mathcal{D}_{v}=\{(x_{i}, y_{i})\}^{M}_{i=1}$ via
\begin{equation}
\label{E7}
\lambda^{\ast} = \arg \min_{\lambda \in [0,1]}\frac{1}{M}\sum^{M}_{i=1}\ell(f_{v}(x_{i};\theta^{\ast}(\lambda)),  y_{i})
\end{equation}
where $f_{v}$ denotes the meta network which has the same architecture as $f$. 
While $f$ is used for prediction, and $f_{v}$ is only used to optimize $\lambda_i$. Similar to other meta learning methods where parameters of a network can be quickly adapted to the task guided by  meta-objective, our meta network ($f_{v}(\theta) = f(\theta; \lambda_{i})$) 
aims to search the optimal $\lambda_{i}$ to interpolate the training samples, but using gradient descent.

Optimization of \eqref{E7} indicates a bilevel optimization problem \cite{ColsonMS07} with $\lambda$ as the upper-level variable and $\theta$ as the lower-level variable. The nested formulation also arises in gradient-based hyperparameter optimization \cite{Pedregosa16}, which is relevant in a manner that interpolation policy could be regarded as a special type of hyperparameter, although its magnitude is substantially higher than scalar-valued hyperparameters such as the learning rate, and it is harder to optimize.

Given the original MixUp examples, at $t$-th step, we first take a step forward with meta network to update model weights $\theta$: 
\begin{equation}
\label{E9}
    \theta_{t+1} = \theta_{t} - \eta\nabla_{\theta}\frac{1}{n}\sum^{n}_{i}\ell(f_{v}(\Tilde{x}_{i}; \theta),  \Tilde{y}_{i}),
\end{equation}
where $\eta$ is the learning rate. 
Then, the ideal optimal $\lambda^{*}$ can be calculated on the validation set by
\begin{equation}
\label{E6}
     \lambda^{\ast} = \arg \min_{\lambda \in [0,1]}\frac{1}{M}\sum^{M}_{i=1}\ell(f_{v}(x_{i};\theta_{t+1}),  y_{i})
\end{equation}
However, the complete optimization of the above objective exactly can be prohibitive due to the expensive inner optimization. We therefore propose a simple approximation scheme and some relaxation to make it more scalable in practice. To achieve this, we consider using an online approximation, by performing only a single gradient descent step on 
the validation set, without solving the inner optimization completely by training until convergence. 

The generalization performance of the model is measured with a validation loss based on an unregularized meta model. Hence the value of the loss depends only on elementary parameter updates \cite{LuketinaRBG16}. The gradient of the validation loss with respect to interpolation policy $\lambda$ is:
\begin{equation}
\label{E11}
    \nabla_{\lambda}\ell(f_{v},\mathcal{D}_{v}) = \frac{\partial}{\partial \lambda}\frac{1}{m}\sum^{m}_{i=1}\ell(f_{v}(x_{i};\theta_{t+1}),  y_{i}),
\end{equation}
where $m$ denotes the batch size.

To achieve our objective, we only consider the influence of the interpolation policy on the current elementary parameter update, $\nabla_{\lambda}\ell(f_{v},\mathcal{D}_{v})$. The interpolation policy update is therefore:
\begin{equation}
\label{E12}
    \lambda := \lambda -\alpha \nabla_{\lambda}\ell(f_{v},\mathcal{D}_{v}),
\end{equation}
where $\alpha$ is the step size for updating $\lambda$. Since $\lambda$ is free to span the entire set of real numbers, we project $\lambda$ back to $[0,1]$ with a sigmoid function:
\begin{equation}
\label{E13}
    \lambda^{*} := sigmoid(\lambda),
\end{equation}
The complete computation graph of MetaMixUp is summarized and visualized in Fig. \ref{fig:figure1}. 

Given the refined interpolation policy, we re-mixup the training examples and update the weight of network in training state. It is straightforward to adopt MetaMixUp in  supervised learning (SL). We detail the outline of our MetaMixUp algorithm in Algorithm \ref{alg:algorithm}. 

\begin{algorithm}[htp]
 \caption{MetaMixUp for supervised learning}
 \label{alg:algorithm}
 \textbf{Input}:
 Training data $\mathcal{D}$, validation $\mathcal{D}_{v}$. \\
 \textbf{Parameters}: Deep neural network $\Phi (\theta)$, batch size $B$, learning rate $\eta$, step size $\alpha$.\\
 \textbf{Output}: Deep neural network $\Phi(\theta)$ and $\lambda^{*}$.\\
 \begin{algorithmic}[1] 
 \FOR{$t$ = 1, 2, ... , $Iter_{max}$}
 \STATE \textbf{Random initialize} $\lambda=\{\lambda_{i}\}^{B}_{i=1}$ ;
 \STATE \textbf{Turn network to meta stage} $\Phi(\theta) \to \Phi^{'}(\theta)$;
 \STATE \textbf{MixUp} examples with $\lambda$ to construct $\Tilde{\mathcal{D}}$;\\
 \STATE \textbf{Update} $\theta^{'}=\theta-\eta\nabla_{\theta}\ell(\Phi^{'}(\theta),\Tilde{\mathcal{D}} )$;
 \STATE \textbf{Update} $\lambda^{*}=\lambda-\alpha\nabla_{\lambda}\ell(\Phi^{'}(\theta^{'}),\mathcal{D}_{v} )$;
 \STATE \textbf{MixUp} examples with updated $\lambda^{*}$ to reconstruct $\Tilde{\mathcal{D}}$;
 \STATE \textbf{Update nework} $\theta:=\theta-\eta\nabla_{\theta}\ell(\Phi(\theta),\Tilde{\mathcal{D}} )$;
 \ENDFOR
 \STATE \textbf{return} $\Phi(\theta)$ and $\lambda^{*}$
 \end{algorithmic}
 \end{algorithm}

\subsection{Extension of MetaMixUp for Semi-Supervised Learning}
It is non-trivial to apply MixUp and its variants to semi-supervised learning (SSL) tasks since MixUp performs the interpolation in both data and \emph{label} space, which is not applicable for unlabeled data. In this section, we present the strategy of adapting MetaMixUp to SSL.

Before presenting the SSL extension of MetaMixUp, we first detail the notations. We denote by $\mathcal{D} = \mathcal{L}\cup \mathcal{U}$ the entire training set with a relatively small labeled data set $\mathcal{L} = \{(x_{i}, y_{i})\vert i=1, 2, \dots, L\}$ and a large unlabeled data set $\mathcal{U} = \{(x_{i})\vert i=L+1,  ..., L+U\}$. Consider here a $C$-class classification problem, each $y_{i} = [y^{1}_{i}, y^{2}_{i}, ..., y^{C}_{i}]^{T} \in \{0,1\}^{C}$ denotes
the corresponding one-hot true label, 
such that $y^{j}_{i}=1$ if $x_{i}$ belongs to the $j$-th class and $y^{j}_{i}=0$ otherwise. 
Let $N = L + U$ be the total number of training samples, and usually we have $L \ll U$.

As the main challenge  of adapting MixUp and MetaMixUp to SSL is the lack of labels for unlabelled data, pseudo-label seems to be a natural choice of the basic SSL method for MixUp.
Typically, pseudo-labeling based methods simply leverage the high-confident predictions as the true labels and results in high performance in practice. We employ standard cross-entropy loss as classification loss. 
For a pseudo-label based SSL method, the  loss function is  
\begin{equation}
\label{E2}
    \ell(\mathcal{X}, \bar{\mathcal{Y}}; \theta) = \frac{1}{N}\sum^{N}_{i=1}\ell(f(x_{i};\theta), \bar{y}_{i}),
\end{equation}
where $\bar{\mathcal{Y}} = \{{\bar{y}_{i}}\}^{N}_{i=1}$ denotes all the true or pseudo labels for the training set $\mathcal{X} = \{{x_{i}}\}^{N}_{i=1}$. If ${x}_{i} \in \mathcal{L}$, $\bar{y}_{i}$ is fixed to its corresponding true label vector $\bar{y}_{i} = y_{i}$ throughout the entire training
process. For an unlabeled training sample $x_{i} \in \mathcal{U}$, $\bar{y}_{i}$ is the estimated label
vector by the network at current iteration. 
Utilizing pseudo-labels, we can implicitly generate the `augmented' set $\Tilde{\mathcal{D}} = \{(\Tilde{x}_{i}, \Tilde{y}_{i})\}^{N}_{i=1}$ during training process. Then unsupervised  MetaMixUp can be transformed to the supervised one.

\textbf{Asynchronous Pseudo Labeling.} 
A commonly known issue of pseudo-label based method is that the incorrect pseudo-labels of the newly labeled examples can propagate errors and thus degenerate the performance. To solve this issue, we propose an asynchronous strategy, named asynchronous pseudo labeling (APL), to improve deduced pseudo label quality for unlabeled data and thus stabilize the training. Specifically, to filter out the unconfident pseudo-labels, we predefine a threshold $\sigma$ such that all the unlabeled samples with the maximal prediction probability under $\sigma$ will be eliminated from performing back propagation of the loss function. This threshold allows to mitigate influence from the uncertain predictions of the unlabeled samples. Accordingly, unlabeled training samples are dynamically relabeled with more accurate labels since easy examples are generally predicted with high confidence while those with low confidence are more likely to be hard examples.
Instead of using a constant threshold, we asynchronously decrease the threshold to make more unlabeled data to be labeled after further epochs as shown effective in \cite{SaitoUH17}. 
The threshold $\sigma$ is decreased by $\sigma_{d}$ every $K$ epochs and is defined at $t$ epoch, by
    $\sigma_{t} = \sigma_{t-1} - \sigma_{d}\times [\frac{t}{K}]\times K$,
where $[ \cdot ]$ denotes the round-off operation. We set initial threshold $\sigma_{0}=0.95$ and $K = 30$ in all experiments to avoid the frequent update of pseudo-label. 
The framework is summarized in Algorithm \ref{alg:semialgorithm}. 

\begin{algorithm}[htp]
\caption{MetaMixUp for Semi-Supervised Learning}
\label{alg:semialgorithm}
\textbf{Input}:
Labeled training data $\mathcal{D}_{L}$, unlabeled training data $\mathcal{D}_{UL}$, validation $\mathcal{D}_{v}$,  \\
\textbf{Parameters}: Deep neural network $\Phi(\theta)$, batch size $B$, learning rate $\eta$, step size $\alpha$, pseudo-label threshold $\sigma$, decrease step $\sigma_{d}$, maximum iterations $Iter_{max}$.\\
\textbf{Output}: Deep neural network $\Phi(\theta)$ and $\lambda^{*}$ \\
\begin{algorithmic}[1] 
\FOR{$t$ = 1, 2, ... , $Iter_{max}$}
\STATE \textbf{Random initialize} $\lambda=\{\lambda_{i}\}^{2B}_{i=1}$;

\STATE {\bf Labeling} $\bar{y}_{j}=\arg\max \Phi(x_{j},\theta)$ \textbf{if} $\Phi(x_{j},\theta)>\sigma$
\IF{$t$ reach update period}
\STATE $\sigma = \sigma - \sigma_{d}$
\ENDIF
\STATE \textbf{Turn network to meta stage} $\Phi(\theta) \to \Phi^{'}(\theta)$;
\STATE \textbf{MixUp} examples with $\lambda$ to construct $\Tilde{\mathcal{D}}_{L}$ and $\Tilde{\mathcal{D}}_{UL}$;\\
\STATE {\bf Calculate MetaLoss} \\
$MetaLoss = \bar{L}_{s}(\Phi^{'}(\theta),\Tilde{\mathcal{D}}_{L}) + \bar{L}_{us}(\Phi^{'}(\theta),\Tilde{\mathcal{D}}_{UL})$

\STATE \textbf{Update} $\theta^{'}=\theta-\eta\nabla_{\theta}MetaLoss$;
\STATE \textbf{Update} $\lambda^{*}=\lambda-\alpha\nabla_{\lambda}\ell(\Phi^{'}(\theta^{'}),\mathcal{D}_{v} )$;
\STATE \textbf{MixUp} examples with $\lambda^{*}$ to reconstruct $\Tilde{\mathcal{D}}_{L}$ and $\Tilde{\mathcal{D}}_{UL}$;\\
\STATE {\bf Calculate Loss} \\
$Loss = L_{s}(\Phi(\theta),\Tilde{\mathcal{D}}_{L}) + L_{us}(\Phi(\theta),\Tilde{\mathcal{D}}_{UL})$ 
\STATE \textbf{Update} $\theta:=\theta-\eta\nabla_{\theta}Loss$;
\ENDFOR
\STATE \textbf{return} $\Phi(\theta)$ and $\lambda^{*}$
\end{algorithmic}
\end{algorithm}

\section{Experiments}
We study here the regularization properties of our MetaMixUp on typical image classification benchmarks. The aim of the following experiments is threefold. First, we investigate the impact of the vanilla interpolation policy of original MixUp on the quality of the solution on multiclass classification problems. Second, we test our proposed MetaMixUp methods in context of supervised learning and semi-supervised learning. Finally, we constrast the MetaMixUp technique against classical MixUp approaches to learn models with better regularization properties.

\subsection{Datasets}
\textbf{MNIST and Fashion.}
Both datasets contain 60,000 training and 10,000 test images (28$\times$28) of 10 classes.  Both datasets are  used for \emph{supervised learning} under the standard   training and test splits.

\textbf{CIFAR-10 and CIFAR-100.}
{CIFAR-10 and CIFAR-100 have 10 and 100  classes of natural  images ($32\times 32$) respectively.}
For \emph{supervised learning}, we use the standard data split for training (50000) and test (10000). 
For \emph{semi-supervised learning}, we follow  \cite{OliverORCG18}, where 1000 or 4000 images (100 or 400  per class) in CIFAR-10 are selected from the training set as the labeled training examples, and the remaining  as the unlabeled  data. CIFAR-100 is only used for \emph{supervised learning task}.

\textbf{SVHN.}
The Street View House Numbers (SVHN) dataset \cite{netzer2011reading} contains real world $32\times32$ images of house numbers.
It contains 73,257 training and 26,032 test images. We use the standard training/test split for \emph{supervised learning}. For \emph{semi-supervised learning}, we follow  \cite{OliverORCG18} to  randomly select 50 or 100  samples per class from the training set as the labeled data, the remaining  as the unlabeled data.

\textbf{ImageNet.}
ImageNet-2012 is a dataset for classification \cite{RussakovskyDSKS15} with 1.3 million training images, 50,000 testing images, and 1,000 classes. We used the standard data split for supervised training, and followed the data processing approaches used in AdaMixUp \cite{Adamixup}, in which the crop size is 100$\times$100 instead of 224$\times$224 due to limited computational resources.

\subsection{Implementation Details}

{To make a fair comparison and follow the settings of existing works, we train all the models from scratch. 
We set step size $\alpha$ to 5.0 and choose SGD as the optimizer for all the experiments with the momentum
 0.9 and weight decay  $10^{-4}$. We train our models on two NVIDIA GTX 1080 Ti GPUs.} 

\textbf{Supervised Learning (SL)}
Following \cite{Adamixup}, a 3-layer CNN is used for the tasks on MNIST and Fashion. For CIFAR-10, CIFAR-100, SVHN and ImageNet,  the PreAct-ResNet-18 \cite{HeZRS16eccv}, PreAct-ResNet-34 and Wide-ResNet-28-10 architectures are used for all the considered methods. 
We set the batch size as 128, initial learning rate as 0.1 followed by cosine annealing \cite{LoshchilovH17}, number of epochs as 600.
{We randomly sample 1000 images (100  per class) from the original training set to construct our meta validation set.}
For data augmentation, we only use  horizontal flipping following \cite{zhangmixup} for all of the training datasets.
 
\textbf{Semi-Supervised Learning (SSL)}
We follow the unified evaluation platform  \cite{OliverORCG18}   for SSL to make fair comparisons. We set batch size 100 for both CIFAR-10 and SVHN. All the models are trained with 200 epochs using data augmentation (horizontal flip and 2-pixel translation) following \cite{OliverORCG18}. {We select 500 images (50  per class) from the  training set as  our meta validation set}. For asynchronous pseudo-labeling threshold decreasing step size, we set $\sigma_{d}=0.05$.  Following \cite{ZagoruykoK16}, learning rate starts from 0.1, and is divided by 10 after 60, 120 and 180 epochs respectively. For a fair comparison, we run all methods with WideResNet-28-2 \cite{ZagoruykoK16}  and the average  accuracy is obtained by 5 runs.

\subsection{Results}
\textbf{Results of Supervised Learning.} We compare MetaMixUp with the baseline method (w/o MixUp) and two counterpart methods (MixUp and AdaMixUp) on five popular supervised databases. We train a variety of residual network for each method. The error rates presented in Table \ref{tab:sup} show that our MetaMixUp substantially outperforms the baseline (w/o MixUp), MixUp and AdaMixUp on all the five datasets.

Surprisingly, training with MixUp does not always improve the performance. For instance, baseline method (w/o MixUp) excels MixUp (0.54\% vs. 0.59\% on MNIST and 52.84\% vs. 55.06\% on ImageNet). The error rate rise of MixUp over baseline (w/o MixUp) on ImageNet is more distinct, which is also observed on PreActResNet34 and Wide-ResNet-28-10 architectures (3.01\% and 2.55\% top-1 error rate rise respectively). This issue strongly indicates that MixUp is not rubust to all datasets or tasks with its default interpolation policy, and a smart refinement method is needed.

As described in \cite{zhangmixup}, interpolation policy $\lambda$ of vanilla MixUp is drawn according to a Beta distribution: $\lambda \sim \beta(\alpha, \alpha)$, where $\alpha$ is an extra hyperparameter needed to turn.  With $\alpha = 1.0$, this is equivalent to sampling from an uniform distribution $U(0, 1)$. To investigate the impact of $\alpha$ and $\lambda$ on the performance, we operate hyperparameter tuning on $\alpha$ which determines the distribution of $\lambda$. The influence is presented by experimental error rate on three datasets in Table \ref{tab:hyper}. We find that for $\alpha$, vanilla MixUp has different behaviors on disparate datasets, i.e., original MixUp somehow increases the error rate on MNIST and SVHN compared to baseline. This on the contrary suggests that mixing samples in a data-adaptive way tends to provide positive impacts on generalization performance. To further study how interpolation policy impacts the generalization performance, we directly fix $\lambda$ for each pair of training samples in the training process, in this scenario, the performance becomes more unstable even worse than training without MixUp. The findings above substantially supports our goal to directly make optimization on $\lambda$ to generate data-adaptive interpolation policy for better MixUp techniques. As we see, the proposed MetaMixUp significantly improves vanilla MixUp without need of turning interpolation policy distribution. 

Our MetaMixUp improves vanilla MixUp by a large margin on easier  tasks (0.38\% vs. 0.59\% on MNIST and 5.15\% vs. 7.31\% on Fashion) as well as on harder tasks, SVHN (2.96\% vs. 3.83\%), CIFAR-10 (3.12\% vs. 4.57\%), CIFAR-100 (20.36\% vs. 21.35\%) with PreActResNet18. Compared with the original MixUp, the promotion is more obvious while outperforming the original version by 7.71\% top-1 accuracy on ImageNet. This improvement is consistent at different architectures, where MetaMixUp (top-1 error 47.35\%, top-5 error 24.43\%) outperforms the original MixUp (top-1 error 55.06\%, top-5 error 31.32\%) with PreActResNet34 and (top-1 error 46.38\%, top-5 error 23.91\% versus top-1 error 52.96\%, top-5 error 29.28\%) with Wide-Resnet28-10. These comparisons suggest the significant of learning a suitable interpolation policy.
As a competitive counterpart of MetaMixUp, AdaMixUp has a generator that outputs the interpolation policy and a discriminator that judges the quality of the generated policy. 
As Table \ref{tab:sup} shows, MetaMixUp distinctly outperforms AdaMixup which is strengthened with a discriminator (0.38\% vs. 0.49\%) on MNIST, (5.15\% vs. 6.21\%) on Fashion, (2.96\% vs. 3.12\%) on SVHN, (3.12\% vs. 3.52\%) on CIFAR-10, and (20.36\% vs. 20.97\%), (47.55\% vs. 49.17\%) on the more challenging CIFAR-100 and ImageNet benchmark respectively. Moreover, experiments on three different architectures consistently demonstrate the remarkable improvement gained by MetaMixUp, and the best results are achieved on Wide-ResNet-28-10. These comparisons firmly demonstrate the effectiveness of MetaMixUp.

\begin{table*}[!htb]
\centering
\caption{Error rates $(\%)$ of supervised learning on test set. $\ddag$ refers to the results from \cite{Adamixup}.}
\resizebox{0.9\textwidth}{!}{
\begin{tabular}{l|cc|ccccc}  
\hline
\multirow{2}{*}{Datasets}              & \multirow{2}{*}{MNIST} & \multirow{2}{*}{Fashion}& \multirow{2}{*}{SVHN}& \multirow{2}{*}{CIFAR-10}   & \multirow{2}{*}{CIFAR-100} & \multicolumn{2}{c}{ImageNet}  \\
\cline{7-8}
& & & & & & Top-1&Top-5\\
                   
\hline
 Architecture&    \multicolumn{2}{c|}{3-layer CNN }   &  \multicolumn{5}{c}{PreActResNet18}      \\
\hline
Baseline    &0.54 & 7.31 & 4.62& 5.62       & 25.20 &  52.84 & 29.48    \\
MixUp \cite{zhangmixup}  &0.59 & 6.74& 3.83 &4.57   & 21.35 &   55.06 & 31.32  \\
AdaMixup w/o Discriminator $\ddag$ \cite{Adamixup}                   & - & -& -&3.83   & 24.75 &  - & -    \\
AdaMixup w Discriminator $\ddag$ \cite{Adamixup}      & 0.49 & 6.21 &3.12 & 3.52&   20.97 &   49.17 & 25.78   \\
{MetaMixUp (ours)}    &\textbf{0.38} & \textbf{5.15}&\textbf{2.96} & \textbf{3.12}    &    \textbf{20.36} & \textbf{47.35}& \textbf{24.43} \\
\hline
Architecture &    &  &   \multicolumn{5}{c}{PreActResNet34}      \\
\hline
Baseline    &- & - & 4.46& 5.32       & 24.63 &  50.25 & 28.59    \\
MixUp \cite{zhangmixup}  &- & -& 3.35 & 4.14   & 20.85 &   53.26 & 29.83  \\
{MetaMixUp (ours)}    &- & - &\textbf{2.42} & \textbf{2.51}    &    \textbf{18.51} & \textbf{46.89}& \textbf{23.93} \\
\hline
Architecture  &     &  &   \multicolumn{5}{c}{Wide-Resnet-28-10}      \\
\hline
Baseline    &- & - & 4.34& 4.74       & 22.33 &  50.41 & 28.38    \\
MixUp \cite{zhangmixup}  &- & -& 3.31 &3.07   & 19.21 &   52.96 & 29.28  \\
{MetaMixUp (ours)}    &- & - &\textbf{2.33} & \textbf{2.48}    &    \textbf{18.45} & \textbf{46.38}& \textbf{23.91} \\
\hline
\end{tabular}}

\label{tab:sup}
\end{table*}

\begin{table*}[!htb]
\centering
\caption{Test error rates ($\%$) of supervised learning for different $\alpha$ and $\lambda$ on test set of CIFAR-10, CIFAR-100, SVHN and MNIST. Note that $\alpha=0$ indicates standard training without MixUp. 
}
\resizebox{0.6\textwidth}{!}{
\begin{tabular}{lcccc}  
\hline
Method              & CIFAR-10  & CIFAR-100 & SVHN  & MNIST     \\
                                                                \\
\hline
MixUp ($\alpha=0$)  & 5.62      & 25.20     & 4.62  & \bf0.54      \\
MixUp ($\alpha=0.5$)& 4.65      & 21.67     & 4.71  & 0.66      \\
MixUp ($\alpha=1$)  & \bf4.57      & \bf21.35     & 4.03  & 0.59      \\
MixUp ($\alpha=2$)  & 4.78      & 21.49     & \bf3.86  & 0.61      \\
\hline
MixUp ($\lambda=0.1$) & 5.02    & 22.56   & 4.21   & \bf0.56       \\
MixUp ($\lambda=0.2$) & \bf4.69 & 22.83   &\bf4.14 & 0.59       \\
MixUp ($\lambda=0.3$) & 4.71    &\bf22.44 & 4.42   & 0.64       \\
MixUp ($\lambda=0.4$) & 5.29    & 23.29   & 4.73   & 0.71       \\
MixUp ($\lambda=0.5$) & 5.88    & 23.62   & 4.78   & 0.70       \\
\hline
MetaMixUp (ours)    &{\bf2.48}  &{\bf18.45} &{\bf2.33}  &{\bf0.38}  \\
\hline
\end{tabular}}

\label{tab:hyper}
\end{table*}

\begin{table*}[!htb]
\centering
\caption{Test error rates ($\%$) of SSL approaches on test set of  CIFAR-10 (1K  means 1K labelled data) and SVHN. 'Supervised-Only' refers to no unlabeled data. $\dag$ refers to the results reported in \cite{OliverORCG18}. 
}
\resizebox{0.8\textwidth}{!}{
\begin{tabular}{lcccc}  
\hline
Method      & CIFAR-10        & CIFAR-10   & SVHN        & SVHN    \\
            & 1K Labels       & 4K Labels  & 500 Labels  & 1K Labels\\
\hline
Supervised-Only   & 35.95 $\pm$  0.25      & 20.34 $\pm$  0.33  & 17.71 $\pm$  0.33 & 12.93 $\pm$ 0.37      \\
Pseudo-Label      & 25.13 $\pm$  0.46    & 17.73 $\pm$ 0.55   & 9.91 $\pm$  0.37  & 7.82 $\pm$ 0.31       \\
$\Pi$-Model $\dag$ \cite{LaineA16}  & 24.81 $\pm$  0.51       & 16.37 $\pm$ 0.63  & 8.82 $\pm$ 0.20  & 7.19 $\pm$ 0.27      \\
Mean Teacher $\dag$ \cite{TarvainenV17} & 23.38 $\pm$  0.24    & 15.87 $\pm$ 0.28  & 8.01 $\pm$  0.22  & 5.65 $\pm$ 0.47      \\
VAT $\dag$ \cite{miyato2018virtual}     & 21.52 $\pm$  0.31    & 13.86 $\pm$ 0.27  & 6.21 $\pm$ 0.46  & 5.63  $\pm$ 0.20      \\
VAT + EM $\dag$ \cite{miyato2018virtual} & 21.28 $\pm$  0.37   & 13.13 $\pm$ 0.39  & 6.14 $\pm$  0.41  & 5.35 $\pm$ 0.19 \\
\hline
MixUp + Pseudo-Label     & 23.47 $\pm$  0.45     & 15.04 $\pm$ 0.33  & 7.42 $\pm$ 0.44   &    6.42 $\pm$ 0.25\\
MixUp + APL              & 22.85 $\pm$  0.39     & 14.79 $\pm$ 0.22  & 6.73  $\pm$ 0.31   &    6.37 $\pm$ 0.37\\
MetaMixUp + Pseudo-Label (ours)  & 21.29 $\pm$  0.31    & 12.64 $\pm$ 0.45  & 6.18 $\pm$ 0.46   &     6.17 $\pm$ 0.35\\
MetaMixUp + APL (ours)   & {\bf 20.66 $\pm$ 0.24} & {\bf 11.50 $\pm$ 0.22}   & {\bf6.05 $\pm$ 0.43}     &     {\bf5.34 $\pm$ 0.31}\\
\hline
MixMatch w/o MixUp &- &  10.97 $\pm$ 0.34&- & 4.89 $\pm$ 0.41\\
MixMatch          & 7.87 $\pm$ 0.36 & 6.42 $\pm$ 0.21& 3.79 $\pm$ 0.65&3.32 $\pm$ 0.43\\
MixMatch + MetaMixUp (ours) &{\bf7.69 $\pm$ 0.29} &{\bf 6.21 $\pm$ 0.24} &{\bf 3.63 $\pm$ 0.45} &{\bf 3.25 $\pm$ 0.41} \\
\hline
\end{tabular}}

\label{tab:semi}
\end{table*}

\textbf{Results of Semi-Supervised Learning.}
Using a standard unified and fair SSL evaluation framework~\cite{OliverORCG18}, we compare our pseudo-labeling extension of MixUp and MetaMixUp with the state-of-the-art methods on CIFAR-10 and SVHN benchmarks. The  error rates are presented in Table \ref{tab:semi}.
The best performance of pseudo-label based methods is achieved by MetaMixUp combining with asynchronous pseudo labeling (MetaMixUP+APL) introduced in Section 3.4. The improvement over its purely pseudo-labeling counterpart is about 6.2\% on CIFAR-10 (4K labels) and 2.5\% on SVHN (1K labels) in terms of accuracy.  When comparing on fewer label settings, the best performance of our MetaMixUP+APL reports in error rate 20.66\% on CIFAR-10 (1K labels) and 6.05\% on SVHN (500 labels). More importantly, all the pseudo-based MetaMixUp outperform its MixUp counterparts, showing the effectiveness of the interpolation policies learned by MetaMixUp.

\begin{table}[!h]
\centering
\caption{Test error rates ($\%$) obtained with a unified  implementation of  various SSL methods.  
}
\resizebox{0.5\textwidth}{!}{
\begin{tabular}{lcccc}  
\hline
Method              & CIFAR-10& CIFAR-10   & SVHN& SVHN    \\
                    &1K Labels& 4K Labels  & 500 Labels& 1K Labels\\
\hline
Pseudo-Label           & 25.13 $\pm$  0.46 & 17.73 $\pm$ 0.55  &  9.91 $\pm$  0.37& 7.82 $\pm$ 0.31       \\
APL     &\textbf{24.36 $\pm$  0.62} & \textbf{16.96 $\pm$ 0.32}  & \textbf{8.68 $\pm$  0.72}&  \textbf{7.46 $\pm$ 0.27}\\

\hline
\end{tabular}
}
\label{tab:apl}
\end{table}
We also notice that using asynchronous pseudo labeling (APL) for both MixUp and MetaMixUp provides a slight improvement benefit. To illustrate the capability of correctly updating pseudo labels for unlabeled data, we also perform a specialized comparison between APL and the baseline pseudo-labeling method (see Table~\ref{tab:apl}). In contrast, APL gradually utilizes the more reliable and stable pseudo labels to enforce classification loss, and hence outperforms pseudo-labeling method \cite{lee2013pseudo} on both CIFAR-10 and SVHN datasets without additional computation cost.

As a complimentary experiment to further provide evidence about improvement obtained by MetaMixUp, we replaced MixUp with our MetaMixUp on MixMatch \cite{mixmatch} to generate interpolation policy for data augmentation. We find that the error rates of \textit{MixMatch + MetaMixUp} are consistently lower than \textit{MixMatch} over all datasets, surpassing the published state-of-the-art approaches by a significant margin to our best knowledge. On dataset CIFAR-10, we achieved test error of 7.69\% (MixMatch + MetaMixUp) with 1K labels and 6.21\% with 4K labels. We achieved an error rate of 3.63\% with only 500 labels on SVHN compared to MixMatch performance of 3.79\%, showing the great success of the adaptation of MetaMixUp to semi-supervised learning.


\begin{figure}[htp!]
    \centering  
	\includegraphics[width=1.0\linewidth, trim=250 95 250 110,clip]{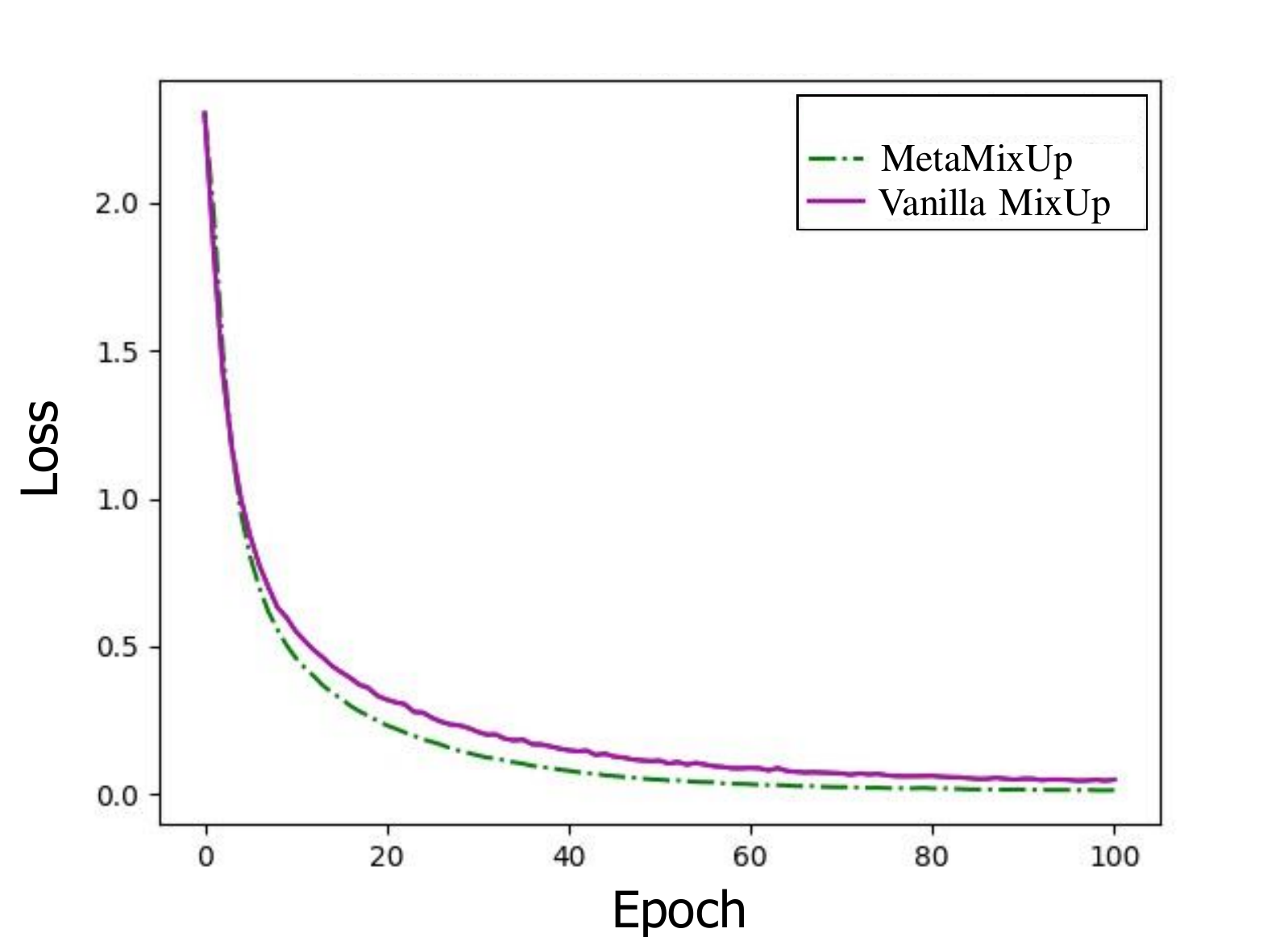}
    \caption{The loss of the supervised training with MetaMixUp and MixUp on CIFAR-10.} 
	\label{fig:loss}
\end{figure}

\subsection{Further Discussion}
In this section, we experimentally study some properties of MetaMixUp and try to answer the following questions: (i) how sensitive is MetaMixUp to the validation size; (ii) does MetaMixUp really mitigate the manifold intrusion issue; (iii) how does the relative frequency of the learned interpolation weighting coefficient differ from the Beta distribution used in MixUp \cite{zhangmixup} on MNIST; (iv) how does the distribution of the learned interpolation policy change during training on CIFAR-10 and (v) how sensitive is MetaMixUp with APL to the confidence threshold hyperparameter in SSL tasks.

\subsubsection{Trade-off of validation size}
Validation size controls the quantity of examples of each class for meta-stage. In order to make a trade-off and understand the sensitivity to the size of validation. Figure \ref{fig:validationsize} plots the classification error rate when we vary the size of validation set on MetaMixUp under supervised configuration. Surprisingly, using 10 examples for each class only results in 0.25\% drop on CIFAR-10 and 0.1\% on SVHN. In contrast, we observe that overall error rate does not drop when having more than 100 examples for each class of validation set. It suggests that our method dose not rely on a larger size of validation set for better performance.

\begin{figure}[!htp]
\centering  
\includegraphics[width=1.0\linewidth]{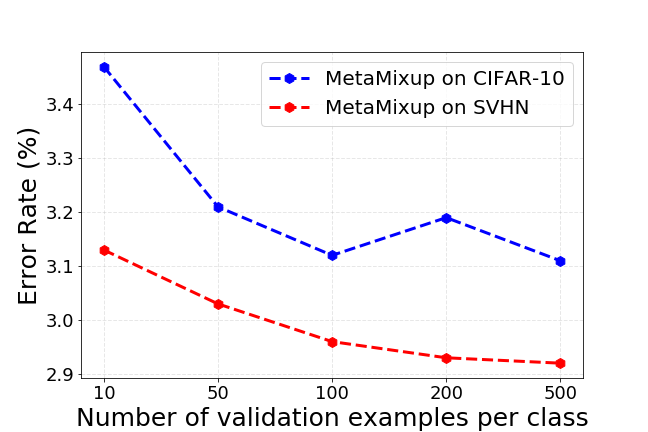}
\caption{Error rate for different validation size of MetaMixUp on CIFAR-10 and SVHN under supervised configure.}
\label{fig:validationsize}
\end{figure}

\subsubsection{Alleviation of manifold intrusion issue}
Manifold intrusion \cite{Adamixup} means improper selection of MixUp weight $\lambda$ can cause the conflicts of the labels of original data and those of mixuped data, leading to degraded performance. 
To verify the effectiveness of our method on mitigating manifold intrusion,
 we train a simple CNN network on MNIST with MetaMixUp under supervised setting and extract the features of all the mixed images during training. For each mixed sample, we compute the minimal Euclidean distance in the feature space of the trained network between this mixed sample and all the different labeled samples. Then, we compute the average and the minimum of these distances respectively for MixUp and MetaMixUp. The results are reported in Figure~\ref{fig:figure4}. The generated samples for MetaMixUp turn out to be farther to the set of the existing real samples. On the other hand, we also plot in Figure~\ref{fig:figure2} the features learned by a network with 2 hidden dimensions. The representation learned with MetaMixUp is more discriminative and thus the collision in the feature occurs with lower probability. Both of the above experiments confirm that MetaMixUp mitigates manifold intrusion to great extent. 
 
Figure \ref{fig:loss} plots the training loss curve of vanilla MixUp and MetaMixUp under a representative setting: ResNet-50 on CIFAR-10, where the $x$-axis denotes the training epochs. The $y$-axis is the training loss on training data. The figure shows two insights. First, the training error of MetaMixUp approaches zero. This empirically verifies the convergence of the model. Second, the loss curve in Figure \ref{fig:loss} generally satisfy the condition in Proposition 1, i.e.,  minimizing  MixUp loss helps controlling the Lipschitz constant. The training loss of MetaMixUp is substantially lower than vanilla MixUp. It suggests that the proposed meta-learning schema to learn mixing policies alleviates the manifold intrusion/underfitting issue in vanilla Mixup and thus optimizes an underlying robust objective.
 
 \begin{figure}[htp]
	\centering  
	\includegraphics[width=1.0\linewidth]{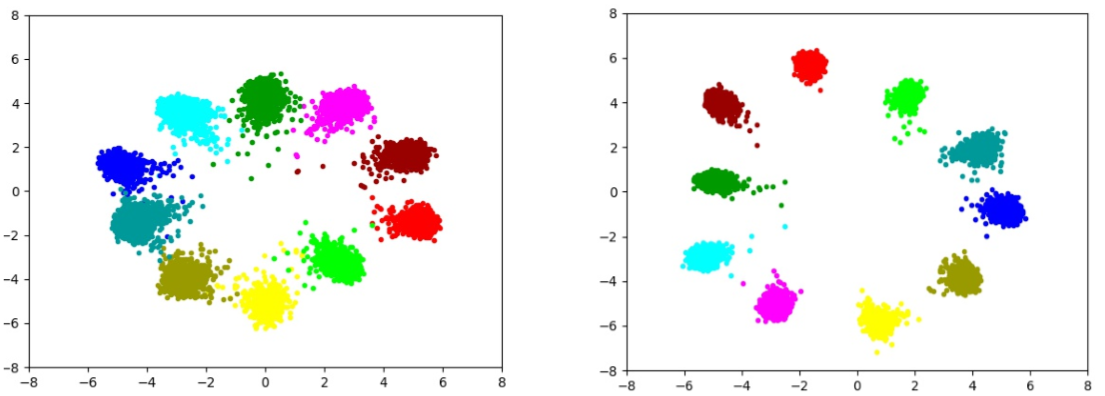}
	\caption{{Feature  of MixUp (left) and MetaMixUp (right) on MNIST.} } 
	\label{fig:figure2}
\end{figure}

 \begin{figure}[htp]
	\centering  
	\includegraphics[width=1.0\linewidth]{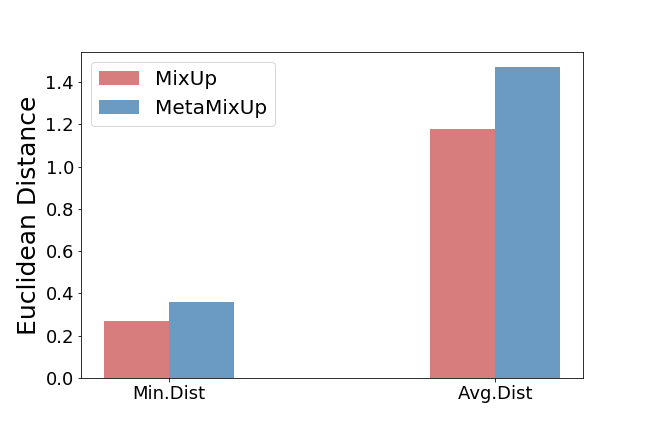}
	\caption{The distance between mixed samples and different labeled original examples. } 
	\label{fig:figure4}
\end{figure}


\subsubsection{Relative frequency of interpolation coefficient on MNIST}
The relative frequency of the interpolation coefficient $\lambda$ generated via MetaMixUp and vanilla MixUp of $\alpha=1.0$ is shown in Figure~\ref{fig:figure3}, when trained on MNIST. A higher frequency of 0 and 1 has occurred in MetaMixUp compared to MixUp. Since manifold intrusion is more likely to occur on MNIST. A possible explanation of this observation is: when mixed samples may cause useless or negative effects, MetaMixUp tries to maintain the original examples to avoid possible collisions or unexpected performance degradation and thus mitigate underfitting.

\begin{figure}[htp!]
    \centering  
	\includegraphics[width=1.0\linewidth,height=0.4\linewidth]{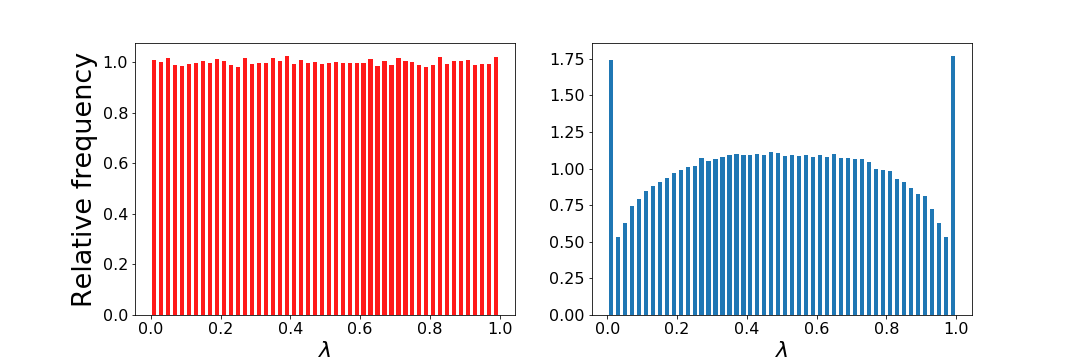}
    \caption{Relative frequency of $\lambda$ generated via MixUp (red) and MetaMixUp (blue) on MNIST.} 
	\label{fig:figure3}
\end{figure}

\begin{figure}[htp!]
    \centering  
    \subfigure[epoch = 3]{
	\includegraphics[width=0.46\linewidth]{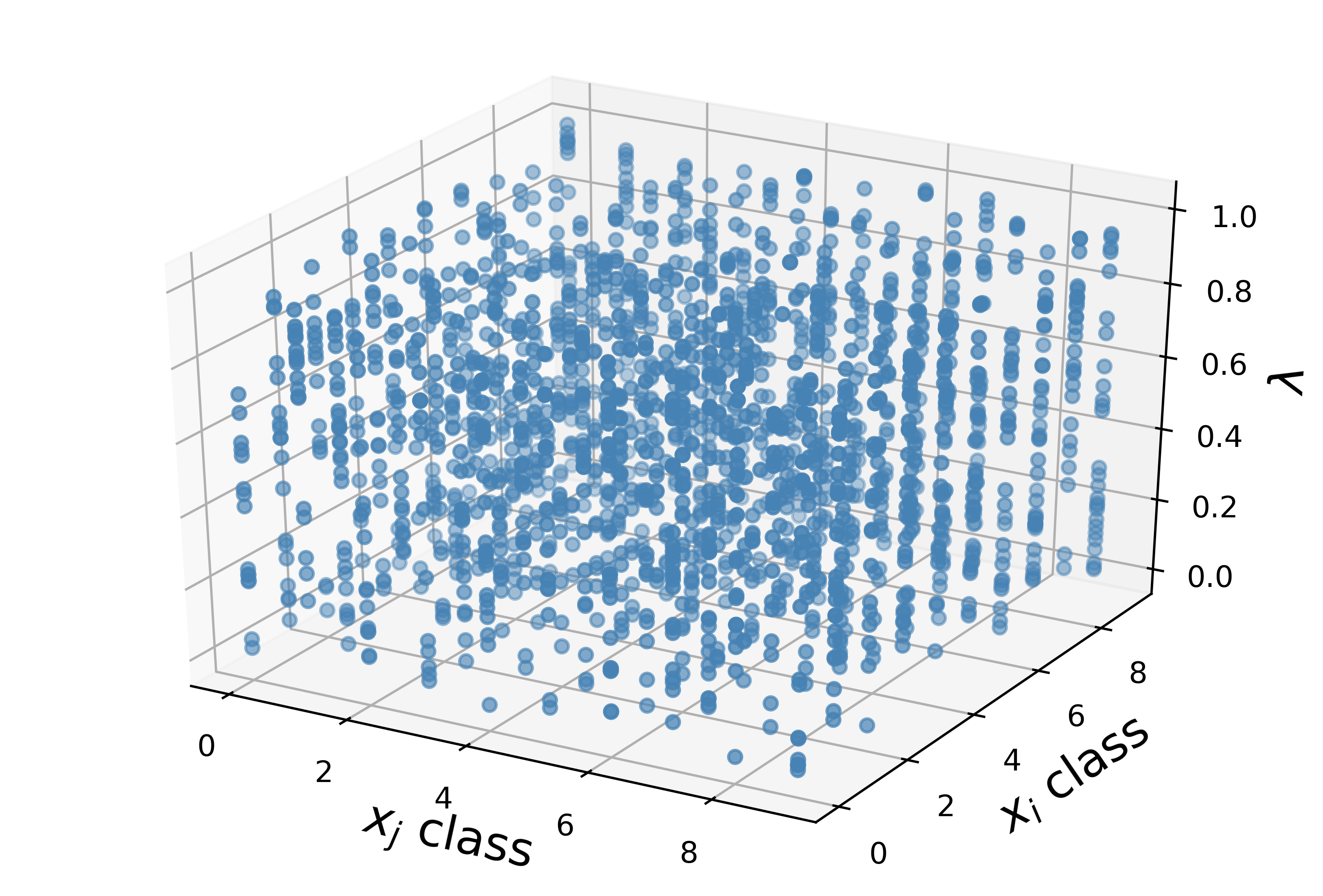}
	}
	\subfigure[epoch = 30]{
	\includegraphics[width=0.46\linewidth]{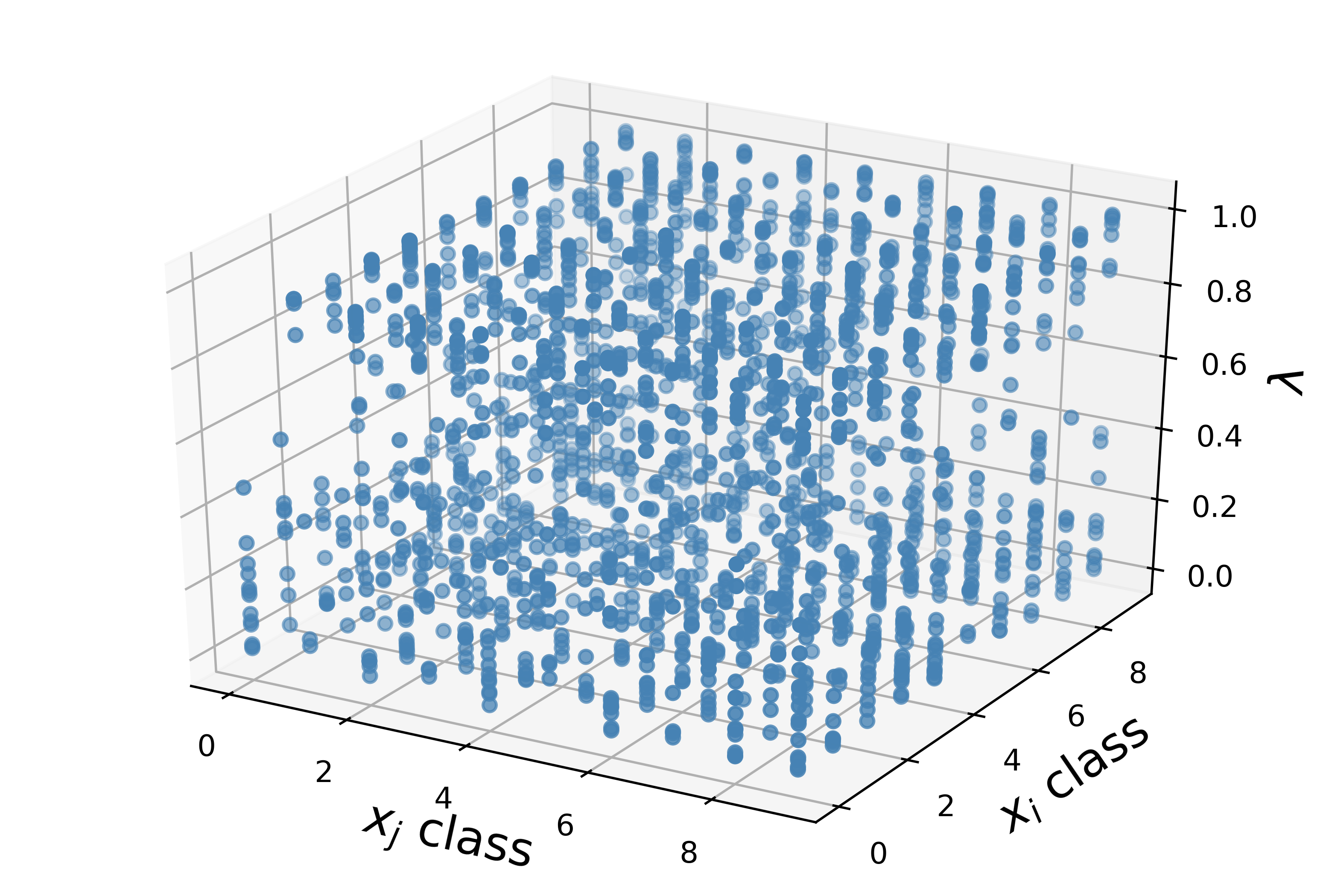}
	}
	\quad
	\subfigure[epoch = 90]{
	\includegraphics[width=0.46\linewidth]{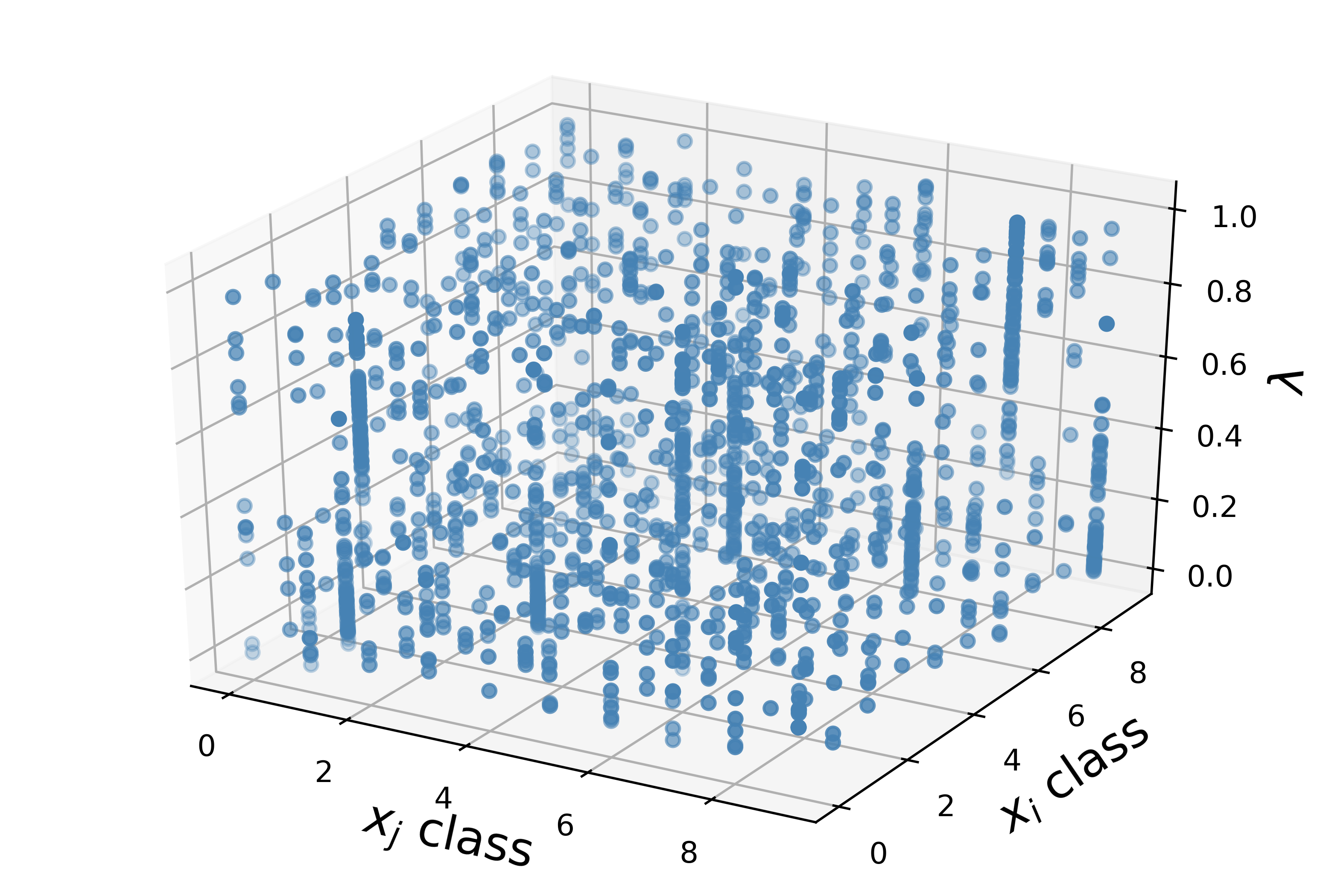}
	}
	\subfigure[epoch = 120]{
	\includegraphics[width=0.46\linewidth]{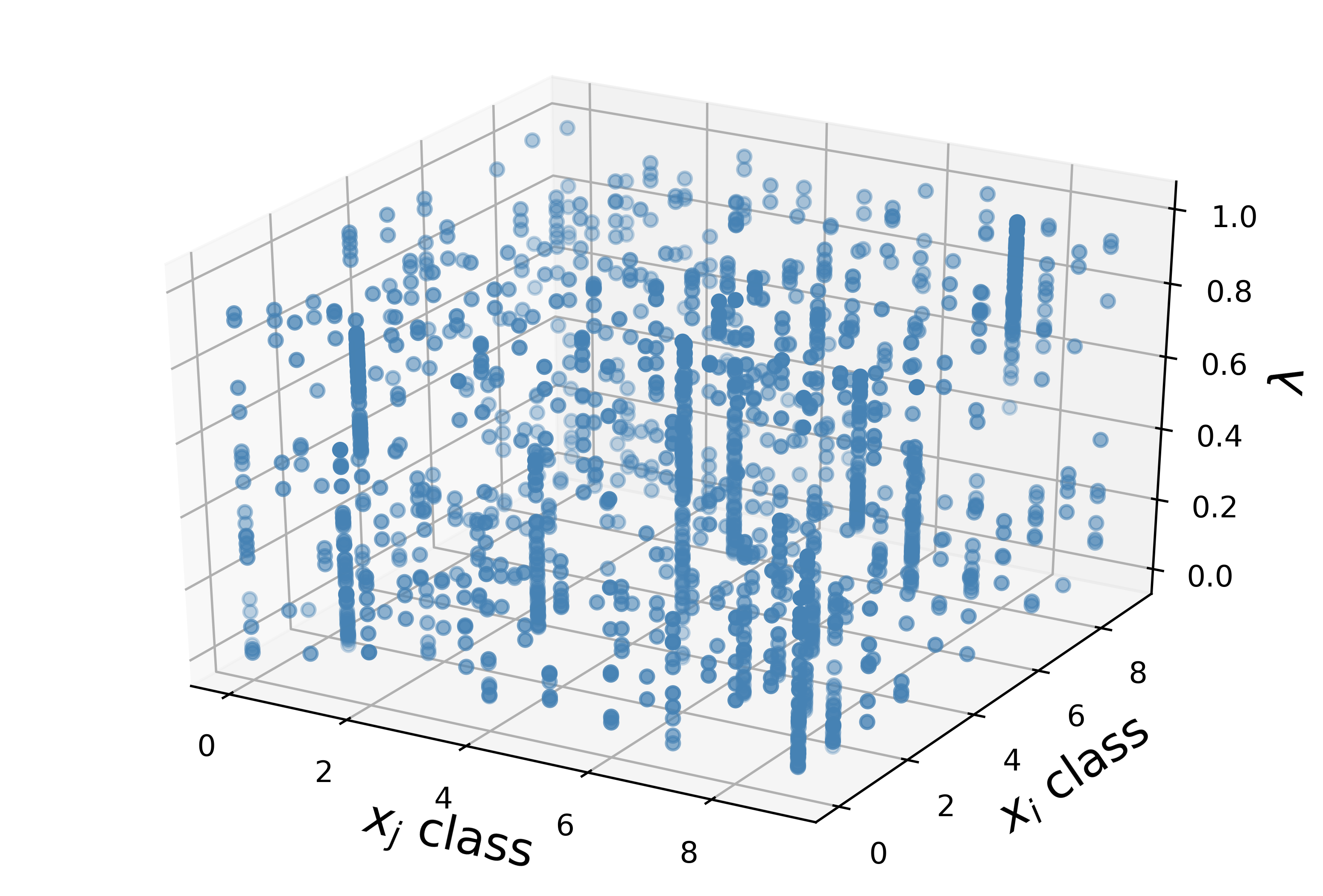}
	}
    \caption{The data-driven interpolation policy distribution learned by MetaMixUp with Resnet-50 on CIFAR-10 at epoch 3 in (a), epoch 30 in (b), epoch 90 in (c) and epoch 120 in (d).} 
	\label{fig:distribute}
\end{figure}

\subsubsection{Distribution of learned interpolation coefficient on CIFAR-10}
Figure \ref{fig:distribute} visualizes the distribution of generated interpolation policy during training on CIFAR-10 in our experiments, where $z$-axix denotes the $\lambda$ learned by our MetaMixUp; the $y$ and $x$ axes denote the class of training samples $x_{i}$ and $x_{j}$. Two observations can be found in Figure \ref{fig:distribute}. First, the learned interpolation policy changes during the training process. Figure \ref{fig:distribute} (a), (b), (c) and (d) are distribution of $\lambda$ learned at different epochs. As shown, the interpolation policy is learned from a random form to a data specific distribution. Second, the learned $\lambda$ gradually converge in idiographic sections for samples of different classes. It satisfy our intuition that the proposed method learns data-driven interpolation policy for MixUp technique.

\subsubsection{Sensitivity to $\sigma_d$}
As introduced in Section 3.4, $\sigma_{d}$ controls the decay rate of the confidence threshold for pseudo-labeling in SSL.
We conduct experiments with $\sigma$ varying from 0.01 to 0.2 to understand the sensitivity to this hyperparameter. The results (see Table~\ref{tab:threshold}) shows that influence of changing $\sigma_{d}$  on performance is small and MetaMixUp is not sensitive to $\sigma_{d}$.

\begin{table}[htp!]
\centering
\caption{Test error rate $(\%)$ for MetaMixUp+APL with different $\sigma_{d}$}
\begin{tabular}{ccccc}  
\hline
$\sigma_{d}$  & 0.01  & 0.05  & 0.1 & 0.2 \\
\hline
CIFAR-10      & 11.84   & 11.50 &  11.22  &  11.98 \\
SVHN          & 5.94   & 5.34   &   5.55   &  5.67   \\
\hline
\end{tabular}
\vspace{1mm}
\label{tab:threshold}
\end{table}

\section{Conclusions and Future Work}
In this paper, we show that vanilla MixUp loss is a lower bound of the Lipschitz constant of the gradient of the classifier function. If not smartly choosing the interpolation coefficient for each pair samples, the model suffers from the underfitting problem leading to a degradation of performance. The proposed MetaMixUp addresses this problem by optimize the interpolation policy of MixUp method with meta-learning scheme in an online fashion. The interpolation policy of MetaMixUp is learned data-adaptively to improve the generalization performance of the model. Experimental results illustrate that MetaMixUp is adaptive to supervised and semi-supervised learning scenarios with remarkable performance improvement over original MixUp and its variants. Our proposed methods achieve competitive performance across multiple supervised and semi-supervised benchmarks. In the future, it would be more interesting to explore the power of MetaMixUp in other challenging tasks. We believe that application-specific adaption of the MetaMixUp of the training objective and optimization trajectories will further improve results over a wide range of application specific areas, including domain adaption \cite{FrenchMF18}, generative adversarial networks, or semi-supervised natural language processing.
\vspace{5mm}

\ifCLASSOPTIONcaptionsoff
  \newpage
\fi



%



\bibliographystyle{IEEEtran}
\bibliography{ref}

\end{document}